\documentclass{article}

\usepackage[preprint]{neurips_2026}

\usepackage{amsmath}
\usepackage{amssymb}
\usepackage{mathtools}
\usepackage{amsthm}
\usepackage{tikz}
\usetikzlibrary{decorations.pathreplacing}

\DeclareMathOperator{\tr}{tr}
\newtheorem{proposition}{Proposition}

\usepackage[utf8]{inputenc} 
\usepackage[T1]{fontenc}    
\usepackage{hyperref}       
\usepackage{url}            
\usepackage{booktabs}       
\usepackage{amsfonts}       
\usepackage{nicefrac}       
\usepackage{microtype}      
\usepackage{xcolor}         
\usepackage{multirow}
\usepackage{graphicx}

\title{GaugeQuant: Online Learning of Quantization-Optimal Bases from LLM Symmetries}

  \author{%
    Miguel P.~Bento \\
    Independent Researcher \\
    Lisbon, Portugal \\
    \texttt{miguel.pedra.bento@tecnico.ulisboa.pt} \\
    \And
    Jo\~ao F.~Seabra \\
    Independent Researcher \\
    Lisbon, Portugal \\
    \texttt{joao.f.seabra@tecnico.ulisboa.pt} \\
  }

\begin{document}

\maketitle

\begin{abstract}
  Transformers are known to have internal continuous symmetries that leave outputs invariant, while modifying quantization. GaugeQuant leverages this in-training by introducing a LogSumExp term to the loss that breaks the symmetries, thus selecting a basis that minimizes activation outliers. A stop-gradient operator ensures that only rotation matrices are updated, yielding the language modeling objective completely unaltered. Our requires no specific calibration data, no quantization simulation, and adds negligible training overhead. With the LLaMA-2 7B model under W4A4 quantization with group size 128, perplexity drops from 8.22 to 6.73, competing with post-training methods that require frozen models and calibration datasets. Under W4A16, perplexity drops from 11.16 to 5.45. Code is available at \url{https://github.com/MPedraBento/gauge-quant}.
\end{abstract}

\section{Introduction}
\label{sec:intro}

The capabilities of Large Language Models (LLMs) are only attained nowadays with massive computational and memory resources. In fact, the transformer~\cite{vaswani2017attention} architectures at the core of state of the art LLMs contain billions of parameters, requiring multiple GPUs for both training and inference~\cite{zhao2026surveylargelanguagemodels}. 

To reduce the memory footprint of LLMs, several quantization techniques have been put forward where different variables (e.g. weights and activations) of the model are compressed into lower-precision formats. Post-training quantization (PTQ) methods are the most commonly seen in the literature, where model variables are quantized after the LLM is fully trained. The scale factors needed to pack the data into fewer bits are computed using small calibration datasets, making this kind of quantization procedures computationally inexpensive. However, PTQ also leads to degradation in the LLM performance, caused by the presence of outliers in the quantized variables distribution. Especially usual among activations~\cite{dettmers2022llmint8, 10.5555/3618408.3619993}, outliers stretch the quantization range, leaving fewer bits available for representing the most frequent values. 

In this paper, we propose GaugeQuant: an in-training quantization mechanism that mitigates PTQ's drawbacks by learning a preferred basis across different boundaries of the LLM architecture during training.\footnote{We use ``gauge'' in the sense of a redundant internal parametrization that leaves observables invariant, analogous to gauge freedom in field theory, though here the symmetry is global (position-independent) at each boundary rather than local in the differential-geometric sense.} The main source of inspiration for our approach is SpinQuant~\cite{Liu2024SpinQuantLQ}, where outliers are suppressed by learning the best orthogonal rotations of the network variables for that effect. Those rotations can be performed because of LLMs' symmetries, which make their outputs invariant under such transformations. While retaining this idea, we introduce a different loss function that removes outliers during training and the need for static calibration datasets. Besides the usual cross-entropy loss, which minimizes the difference between the model's predicted probability distribution and the true probability distribution of the next token, we include a second term that penalizes outliers among the rotated boundary activations. More specifically, the latter takes the form of a LogSumExp function: a smooth, dense approximation of the infinite norm~\cite{10.1007/s10107-004-0552-5} that routes gradient to every dimension, and not just the single maximum outlier. 

This paper is structured as follows: In the following section, we introduce in more detail ideas from previous works that helped us shape our approach to in-training quantization. In Section~\ref{sec:symmetries}, we do an overview of the LLM symmetries. The most important aspects of the LLM training procedure using GaugeQuant are presented in Section~\ref{sec:intraining}. Our results, obtained after applying GaugeQuant to Qwen-2.5 0.5B~\cite{qwen2025qwen25technicalreport} and LLaMA-2 7B~\cite{touvron2023llama} are shown in Section~\ref{sec:results}, using the WikiText-2~\cite{Merity2016PointerSM} dataset as benchmark. A discussion of our results and broader implications of them is made in Section~\ref{sec:discussion}. In Section~\ref{sec:limitations}, we describe the limitations of our approach. Finally, in Section~\ref{sec:conclusions}, we draw some concluding remarks about our work.

\section{Background and related work}
\label{sec:background}
\textbf{LLM PTQ.} In the context of LLMs, quantization is mostly motivated by the prohibitive cost of deploying those models for inference. Earlier approaches like ZeroQuant~\cite{NEURIPS2022_adf7fa39}, LLM.int8()~\cite{dettmers2022llmint8} and nuQmm~\cite{Park2022nuQmmQM} tackled very large models at reasonable runtimes by rounding just the weights to the nearest quantization level. GPTQ~\cite{Frantar2022GPTQAP}, which is commonly used as a PTQ baseline in more recent works, brought a considerable improvement over those methods, allowing the usage of a single GPU for performing inference with Generative Pre-trained Transformer (GPT) models with 175 billion parameters.

Unlike GPTQ where only the model weights are quantized, more recent PTQ schemes provide mechanisms to quantize activations as well. SmoothQuant~\cite{10.5555/3618408.3619993} and AWQ~\cite{lin2023awq} are some of the earliest examples of those schemes, in which the quantization difficulty is migrated between activations and weights.

\textbf{Exploiting LLM symmetries:} Neural network symmetries are a research topic of their own that may give valuable insights about parameter redundancies within models (see for example~\cite{Hashimoto:2024rms}). Those redundancies have been exploited by several authors to reduce the complexity of LLMs. For example, SliceGPT~\cite{Ashkboos2024SliceGPTCL} introduces a post-training sparsification scheme, which takes advantage of the LLM computational invariance -- the designation given by the authors to the output's invariance under any orthogonal transformation of the weight matrices. In the context of quantization, symmetries are usually exploited for reducing outliers. Taking inspiration from the computational invariance described in SliceGPT, Quarot~\cite{Ashkboos2024QuaRotO4} applies randomized Hadamard rotations at internal boundaries of the model, in order to eliminate activation outliers. RoLoRA~\cite{Huang2024RoLoRAFR} couples this idea with Low-Rank Adaptation (LoRA)~\cite{Hu2021LoRALA} to enhance the fine-tuning efficiency of the model. SpinQuant~\cite{Liu2024SpinQuantLQ} takes a step forward in outlier suppression by letting the model learn the rotations that optimize the performance of the (quantized) LLM. To this end, SpinQuant employs the Cayley Stochastic Gradient Descent (SGD), which is especially proficient at optimizing orthonormal matrices~\cite{li2020efficientriemannianoptimizationstiefel}. Building upon Quarot and SpinQuant, ReSpinQuant~\cite{Kim2026ReSpinQuantEL} assigns a unique rotation matrix for each layer. The advent of reasoning LLMs also motivated the proposal of ParoQuant~\cite{liang2026paroquant}: a PTQ scheme tailored to these models, which combines channel-wise scaling with rotations.

While all the works mentioned thus far apply quantization post-training, some authors have put forward quantization-aware training (QAT) techniques that still apply the idea of rotating model parameters to remove outliers. One example is RoSTE~\cite{wei2025roste}, which combines these rotations with quantization-aware supervised fine-tuning. Another example is HALO~\cite{halo2025}, where Hadamard rotations are placed in both the forward and backward passes. Finally, it is also worth mentioning~\cite{Park2025OutlierSafePF}, which proposes an outlier-safe in-training framework that, like GaugeQuant, targets quantization robustness during training, though via architectural modifications (optimizer and normalization changes) rather than learned rotations.

\section{Symmetries of LLMs}
\label{sec:symmetries}

Multi-head attention (MHA) can be described for a single head $h$. Given input $X \in \mathbb{R}^{N \times d_{\text{model}}}$, the network computes
\begin{equation}
    Q^{(h)} = X W_Q^{(h)}, \quad K^{(h)} = X W_K^{(h)}, \quad V^{(h)} = X W_V^{(h)} \, ,
\end{equation}
where $W_Q^{(h)}, W_K^{(h)}, W_V^{(h)} \in \mathbb{R}^{d_\text{model} \times d_k}$ and $d_k$ is the head dimension. With Rotary Positional Embeddings (RoPE)~\cite{su2024roformer}, the pre-softmax scores are
\begin{equation}
    S_{m,n}^{(h)} = \frac{(X_m W_Q^{(h)} \Theta_m) (X_n W_K^{(h)} \Theta_n)^\top}{\sqrt{d_k}} \, ,
\end{equation}
where $\Theta_m$ is the RoPE matrix (block-diagonal 2D rotations) at position $m$. The head output is
\begin{equation}
    Y^{(h)} = \text{softmax}(S^{(h)}) \, V^{(h)} \, W_O^{(h)} \, ,
\end{equation}
with $W_O^{(h)} \in \mathbb{R}^{d_k \times d_\text{model}}$. The SwiGLU MLP block~\cite{Shazeer2020GLUVI} computes
\begin{equation}
    \text{MLP}(X) = \underbrace{\left[ \sigma(X W_{\text{gate}}) \odot (X W_{\text{up}}) \right]}_{H} W_{\text{down}} \, ,
\end{equation}
where $\sigma$ is SiLU and $\odot$ the Hadamard product.

\subsection{Invariances of the transformer}

These linear maps possess continuous symmetries, transformations that leave the network output exactly unchanged.

\paragraph{V--O boundary.} Let $R_V \in SO(d_k)$. Attention computes a weighted sum of value vectors, so the rotation passes through linearly
\begin{equation}
    \text{softmax}(S^{(h)})(V^{(h)} R_V)(R_V^\top W_O^{(h)}) = Y^{(h)} \, .
\end{equation}
This reveals a continuous $SO(d_k)$ symmetry per KV head. For Grouped Query Attention (GQA), where $n_{\text{heads}}/n_{\text{kv}}$ query heads share each KV head, all heads in a group share the same rotation, yielding $SO(d_k)^{n_{\text{kv}}}$ total degrees of freedom.

\paragraph{Q--K boundary.} A rotation $R_Q$ must commute with RoPE: $R_Q \Theta_m = \Theta_m R_Q$. Since $\Theta_m$ is composed of 2D rotations, $R_Q$ is restricted to a block-diagonal matrix of independent planar rotations, reducing the symmetry from $SO(d_k)$ to the torus $U(1)^{d_k/2}$, too few parameters to meaningfully suppress outliers. We do not exploit this smaller symmetry.

\paragraph{MLP boundary.} The linear map $H W_{\text{down}}$ admits the same invariance
\begin{equation}
    H W_{\text{down}} = (H R)(R^\top W_{\text{down}}) \, , \quad R \in SO(d_{\text{ff}}) \, .
\end{equation}
However, unlike the attention case, $R$ cannot be fully absorbed into weights. Absorbing $R$ upstream would require $\sigma(X W_{\text{gate}} R) \odot (X W_{\text{up}} R) = [\sigma(X W_{\text{gate}}) \odot (X W_{\text{up}})] R$, which fails because: (i) SiLU is element-wise but $R$ mixes dimensions, so $\sigma(v R) \neq \sigma(v) R$; and (ii) the Hadamard product satisfies $(a R) \odot (b R) \neq (a \odot b) R$ for non-trivial rotations. Therefore $R^\top$ can be permanently fused into $W_{\text{down}}$, but the multiplication $H \cdot R$ must remain as a runtime operation, analogously to~\cite{Liu2024SpinQuantLQ, Ashkboos2024QuaRotO4}.

\subsection{Basis choice for quantization}

These symmetries mean the parameter space contains degrees of freedom with no effect on exact outputs, they represent a choice of coordinate basis. Let $\Theta$ denote standard parameters and $\Theta_R$ the transformed parameters. At full precision
\begin{equation}
    \hat{y}(x; \Theta_R) = \hat{y}(x; \Theta) \, .
\end{equation}
However, uniform quantization $Q(\cdot)$ is element-wise and non-linear, so it does not commute with rotations
\begin{equation}
    \hat{y}(x; Q(\Theta_R)) \neq \hat{y}(x; Q(\Theta)) \, .
\end{equation}
The quantization grid step $\Delta \propto \|v\|_\infty$ is dictated by the worst outlier (see Appendix~\ref{app:quantization}). The optimal basis is therefore the one that distributes activation mass uniformly across dimensions. The central question is how to find this basis efficiently during training.

\section{In-training gauge optimization}
\label{sec:intraining}

Post-training methods such as SpinQuant~\cite{Liu2024SpinQuantLQ} optimize $R$ by minimizing the quantization error on a frozen model with a static calibration set
\begin{equation}
    \mathcal{L}_{\text{PTQ}}(R) = \| X W - Q(X W R) R^\top \|_F^2 \, .
\end{equation}
This requires the non-differentiable $Q(\cdot)$ operator and a frozen model, thus fundamentally incompatible with in-training optimization.

\subsection*{A differentiable proxy for quantization}

We replace the explicit quantization simulation with a smooth proxy. Since $\Delta \propto \|v\|_\infty$, suppressing the $L_\infty$ norm of the rotated activations directly reduces quantization error. We use the LogSumExp as a dense, differentiable approximation
\begin{equation}\label{eq:l_rot}
    \mathcal{L}_{\text{rot}}(R) = \frac{1}{\beta} \log \left( \sum_{i} \exp\left(\beta \, |z_i| \right) \right) \, , \quad z = \text{sg}(H) \cdot R \, ,
\end{equation}
where $H$ denotes the boundary activations ($H = XW_V^{(h)}$ for attention, $H = \sigma(XW_{\text{gate}}) \odot (XW_{\text{up}})$ for MLP) and $\text{sg}(\cdot)$ is the stop-gradient operator and $\beta > 0$ controls the approximation. The gradient
\begin{equation}
    \frac{\partial \mathcal{L}_{\text{rot}}}{\partial |z_i|} = \frac{\exp(\beta\, |z_i|)}{\sum_j \exp(\beta\, |z_j|)} \, ,
\end{equation}
smoothly addresses outliers while yielding non-zero values for smaller numbers.\footnote{We note that the gradient is exactly the softmax function with inverse temperature $\beta$.}
Eq.~\eqref{eq:l_rot} is not rotationally invariant, and thus, adding it to the loss \emph{explicitly breaks} the symmetry, selecting a unique, quantization-optimal basis.

The stop-gradient operator ensures a clean separation between the language modeling and gauge objectives
\begin{equation}
    \mathcal{L}_{\text{total}} = \mathcal{L}_{\text{CE}}(X; \Theta) + \lambda \, \mathcal{L}_{\text{rot}}(\text{sg}(H) \cdot R) \, ,
\end{equation}
which guarantees
\begin{equation}
    \nabla_W \mathcal{L}_{\text{total}} = \nabla_W \mathcal{L}_{\text{CE}}, \quad \nabla_R \mathcal{L}_{\text{total}} = \lambda \, \nabla_R \mathcal{L}_{\text{rot}} \, .
\end{equation}
The weights adapt to the evolving rotation through the CE loss alone, while the rotation parameters are updated exclusively by LogSumExp. The two objectives are orthogonal by construction.

\subsection*{A symmetry of the training}

Up to this point, we have discussed the symmetries as special orthogonal matrices and purposely ignored the role of weight decay. So, why must the gauge be a special orthogonal matrix rather than an arbitrary invertible matrix? The answer lies in weight decay. Standard training regularizes with $\|W\|_F^2$, which is invariant under orthogonal transformations, but \emph{not} under general linear maps. Therefore, $O(d)$ is a unique continuous symmetry group that is simultaneously an invariance of the forward pass, norm-preserving for activations and an invariant of weight decay (see Appendix~\ref{app:weight_decay}). 

While full orthogonal matrices could be used, naive gradient descent on $R$ leaves the orthogonal manifold after a single step. Instead, we learn a skew-symmetric matrix $A = -A^\top$ and map it to $SO(d)$ via the Cayley transform
\begin{equation}
    R = (I - A)(I + A)^{-1} \, .
\end{equation}
Thus, without loss of generality we choose $SO(d) \subset O(d)$, which admits a smooth Cayley parametrization.

\paragraph{Block-diagonal structure for the MLP.} A full $SO(d_{\text{ff}})$ rotation requires $O(d_{\text{ff}}^3)$ matrix inversion, prohibitive for $d_{\text{ff}} = 11{,}008$. We parametrize the MLP gauge as a block-diagonal rotation
\begin{equation}
    R_{\text{MLP}} = \bigoplus_{k=1}^{d_{\text{ff}}/b} R_k \, , \quad R_k \in SO(b) \, ,
\end{equation}
with block size $b = 64$. Each block is independently mapped via its own Cayley transform with cost $O(b^3)$, giving total cost $O(d_{\text{ff}} \cdot b^2)$, linear in the hidden dimension. The block structure is itself orthogonal, preserving the gauge invariance. All matrix inversions and weight fusions are computed in \texttt{float32} to prevent orthogonality drift from reduced-precision arithmetic.

\section{Empirical studies}
\label{sec:results}

We evaluate GaugeQuant on two models: Qwen-2.5 0.5B~\cite{qwen2025qwen25technicalreport} and LLaMA-2 7B~\cite{touvron2023llama}. Training streams text from C4~\cite{DBLP:journals/corr/abs-1910-10683} for 8192 steps with batch size 1, sequence length 512, and $\lambda = 0.1$. This corresponds to a short continued-training run (${\sim}4$M tokens); integration into full-scale pretraining is left for future work. The rotation loss learning rate is $2 \times 10^{-4}$, while base weights use $2 \times 10^{-5}$ (full fine-tuning for Qwen, LoRA $r{=}16$ for LLaMA-2). The evolution of the $\mathcal{L}_{\text{rot}}$ during training is shown in Figure~\ref{fig:convergence}. Evaluation measures perplexity on WikiText-2~\cite{Merity2016PointerSM} (2048 tokens).

\begin{figure}[h]
\centering
\includegraphics[width=0.60\textwidth]{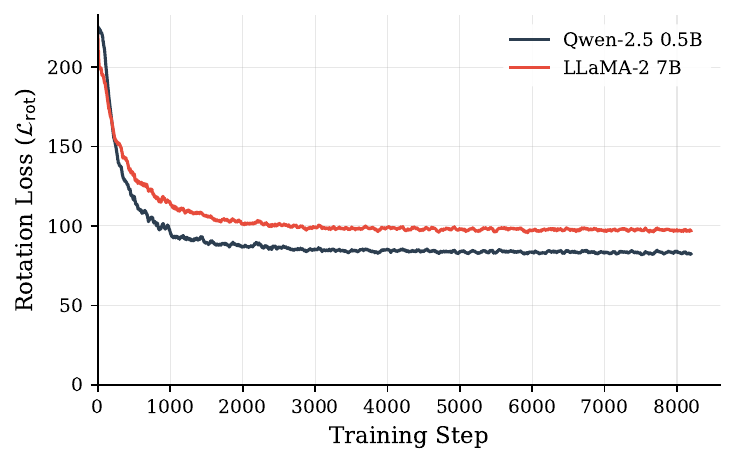}
\caption{Rotation loss ($\mathcal{L}_{\text{rot}}$) during training. The penalty decreases from ${\sim}200$ to ${\sim}100$ over 8192 steps, showing that the learned rotation progressively suppresses activation outliers.}
\label{fig:convergence}
\end{figure}

\paragraph{Quantization setup.} We simulate symmetric min-max quantization on all seven linear layers (\texttt{q/k/v/o\_proj}, \texttt{gate/up/down\_proj}). Weights are quantized per output channel, activations per token. We report three regimes: W4A16 (4-bit weights, full-precision activations), W4A4 g128 (4-bit activations with group size 128) and W4A4 per-token (4-bit activations with a single scale per token, the strictest setting).

\paragraph{Baselines.} For each regime we report three conditions: (1)~\textbf{Baseline}: the pretrained HuggingFace model quantized directly; (2)~\textbf{Control}: fine-tuned with the same training budget but without the gauge loss; (3)~\textbf{GaugeQuant}: fine-tuned jointly with gauge optimization.

\subsection{Results}

\begin{table}[h]
\centering
\caption{WikiText-2 perplexity under symmetric fake quantization. GaugeQuant substantially reduces quantization degradation across all regimes without altering the full-precision model, which confirms the preservation of the invariance.}
\label{tab:main_results}
\vspace{0.5em}
\begin{tabular}{llcccc}
\toprule
\textbf{Model} & \textbf{Method} & \textbf{BF16} & \textbf{W4A16} & \textbf{W4A4 g128} & \textbf{W4A4 per-tok} \\
\midrule
\multirow{3}{*}{Qwen-2.5 0.5B}
  & Baseline       & 9.24  & 25.62  & 187.8 & 16{,}462 \\
  & Control (FT)   & 9.56  & 27.69  & 199.3 & 12{,}637 \\
  & \textbf{GaugeQuant}  & 9.56  & \textbf{16.76} & \textbf{61.2} & \textbf{1{,}430} \\
\midrule
\multirow{3}{*}{LLaMA-2 7B}
  & Baseline       & 4.07  & 11.16  & 8.22 & 2{,}397 \\
  & Control (FT)   & 4.08  & 13.22  & 9.25 & 5{,}099 \\
  & \textbf{GaugeQuant}  & 4.08  & \textbf{5.45}  & \textbf{6.73} & \textbf{1{,}923} \\
\bottomrule
\end{tabular}
\end{table}

\paragraph{W4A16.} On LLaMA-2 7B, GaugeQuant achieves 5.45 PPL under 4-bit weight quantization, a 51\% reduction in quantization-induced degradation compared to the baseline (11.16), approaching the ${\sim}6.2$ reported by SpinQuant under comparable evaluation conditions. On Qwen-2.5 0.5B, the reduction is 35\% (25.62 $\to$ 16.76). Although $\mathcal{L}_{\text{rot}}$ targets activation outliers directly, the fused rotation also transforms the weight matrices ($W_V^{\text{new}} = R^\top W_V$, $W_{\text{down}}^{\text{new}} = W_{\text{down}} R$), redistributing weight values and reducing weight quantization error as a secondary effect.

\paragraph{W4A4 g128.} Under group-128 activation quantization (the regime used by SpinQuant), GaugeQuant reduces perplexity from 8.22 to 6.73 on LLaMA-2 7B (18\% reduction) and from 187.8 to 61.2 on Qwen-2.5 0.5B (67\% reduction). Fine-tuning alone (Control) provides no benefit and can worsen quantization performance, confirming that the improvement is attributable to the gauge rotation.

\paragraph{W4A4 per-token.} Per-token activation quantization (one scale across the full hidden dimension) remains a regime where all methods fail: both models collapse to unusable perplexity regardless of rotation. The relative improvements confirm that the gauge suppresses activation outliers, but viable W4A4 inference requires group quantization as shown above.

\section{Discussion and Broader Implications}
\label{sec:discussion}

We show that the continuous symmetries of transformer architectures can be used \emph{online} during training, rather than only post-hoc on frozen models. The method is complementary to post-training approaches, as one could integrate GaugeQuant into any training run (including pretraining at scale) to obtain a quantization-friendly model, then optionally refine with SpinQuant for further gains. Specifically, the gauge framework generalizes beyond quantization. Any hardware constraint that breaks rotational symmetry (structured sparsity patterns, mixed-precision formats, low-rank decompositions) could in principle be targeted by a similar non-invariant proxy loss.

\section{Limitations}
\label{sec:limitations}

Analogously to SpinQuant, our method relies on an online MLP rotation that adds a small cost to inference. This arises because the SwiGLU non-linearity prevents absorbing $R$ upstream, requiring a block-diagonal rotation ($b=64$) at runtime. This block structure is itself a design choice that limits outlier redistribution across block boundaries. This structure by itself is choice that can limit the possibility of further enhancing results.

Another limitation is that The LogSumExp proxy suppresses all large activations indiscriminately, which may reduce the model's capability to encode distinctions between similar outputs. An additional proxy targeting weight outliers could mitigate this effect.

Finally, experiments are limited to two models (0.5B and 7B), with larger models possibly exhibiting different outlier dynamics.

\section{Conclusion}
\label{sec:conclusions}

We introduced GaugeQuant, a method that exploits the continuous symmetries of transformer architectures to learn quantization-optimal bases during training. By adding a differentiable LogSumExp loss term that explicitly breaks the gauge symmetry, the method selects a basis that suppress activation outliers without interfering with the language modeling objective. On LLaMA-2 7B, GaugeQuant reduces  W4A4~g128 degradation by 18\% ,and W4A16 quantization degradation by 51\%, with no calibration data and negligible training overhead. Future work includes combining GaugeQuant with post-training rotation methods, exploring joint activation-weight incoherence proxies that more closely approximate the true quantization error, and scaling to larger models.

\bibliographystyle{abbrvnat}
\bibliography{biblio}


\appendix

\section{Quantization error and the $L_\infty$ norm}
\label{app:quantization}

We briefly justify why minimizing $\|v\|_\infty$ reduces quantization error. Consider symmetric uniform quantization of a vector $v \in \mathbb{R}^d$ to $b$ bits. The quantized representation is
\begin{equation}
    \hat{v}_i = \Delta \cdot \text{clamp}\!\left(\left\lfloor \frac{v_i}{\Delta} \right\rceil,\, -2^{b-1},\, 2^{b-1}-1\right), \quad \Delta = \frac{\|v\|_\infty}{2^{b-1} - 1} \, .
\end{equation}
The per-element quantization error is bounded by $|v_i - \hat{v}_i| \leq \Delta/2$. The total squared error satisfies
\begin{equation}
    \|v - \hat{v}\|_2^2 \leq d \cdot \frac{\Delta^2}{4} = \frac{d}{4(2^{b-1}-1)^2} \|v\|_\infty^2 \, .
\end{equation}
Therefore, for fixed $d$ and $b$, the worst-case quantization error is directly proportional to $\|v\|_\infty^2$. An orthogonal rotation $R$ preserves $\|v\|_2$ but can reduce $\|v R\|_\infty$ (redistributing mass from outlier dimensions). The optimal rotation satisfies $\|vR^*\|_\infty = \|v\|_2 / \sqrt{d}$ (the isotropic limit), achievable when all coordinates have equal magnitude.

Note that this bound is tight only when $v$ has no clipped entries. In practice, for highly non-uniform $v$, the actual quantization error depends on the full distribution of $v_i$, not just $\|v\|_\infty$. Furthermore, this analysis addresses activation quantization; weight quantization error depends on $\|W R\|_\infty$ along the channel dimension, which our current proxy does not directly penalize (see Section~\ref{sec:limitations}).

\section{Weight decay breaks $GL(d)$ into $O(d)$ as the maximal symmetry}
\label{app:weight_decay}

We prove that $O(d)$ is the largest continuous matrix group whose action on weights is a symmetry of the standard weight decay regularizer.

\begin{proposition}
Let $G \subseteq GL(d)$ be a connected Lie subgroup such that $\|WM\|_F = \|W\|_F$ for all $W \in \mathbb{R}^{m \times d}$ and all $M \in G$. Then $G \subseteq O(d)$.
\end{proposition}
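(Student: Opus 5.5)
The plan is to show directly that every $M \in G$ satisfies $MM^\top = I$; connectedness and the Lie structure are not needed. The Frobenius norm can be written as a trace: $\|WM\|_F^2 = \tr(WMM^\top W^\top)$ and $\|W\|_F^2 = \tr(WW^\top)$. The hypothesis therefore says that for each fixed $M \in G$,
\begin{equation*}
    \tr\!\bigl(W (MM^\top - I) W^\top\bigr) = 0 \quad \text{for all } W \in \mathbb{R}^{m\times d}.
\end{equation*}

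Set $S = MM^\top - I$, which is symmetric. I test the identity on rank-one matrices $W = e_1 x^\top$ with $x \in \mathbb{R}^d$ arbitrary; this only requires $m \geq 1$. For such $W$, the trace reduces to $x^\top S x = 0$ for all $x$. A symmetric matrix whose quadratic form vanishes identically is zero. To see this, polarize: $2x^\top S y = (x+y)^\top S(x+y) - x^\top S x - y^\top S y = 0$, so all bilinear values $x^\top S y$ vanish and hence $S = 0$. This gives $MM^\top = I$, so $M \in O(d)$. Since $M \in G$ was arbitrary, $G \subseteq O(d)$.

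As an optional remark, connectedness together with $I \in G$ sharpens the conclusion to $G \subseteq SO(d)$, because $\det$ is continuous and takes values in $\{\pm 1\}$ on $O(d)$. This matches the paper's choice of the Cayley parametrization.

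There is no real obstacle. The only points to handle with care are that the test matrices $W$ must be allowed to range over all of $\mathbb{R}^{m\times d}$, and the polarization step, which needs $S$ to be symmetric.
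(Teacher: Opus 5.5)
Your proof is correct and follows essentially the same route as the paper: rewrite both Frobenius norms as traces, test on rank-one matrices to show that the symmetric matrix $MM^\top - I$ has identically vanishing quadratic form, and conclude $MM^\top = I$. The only difference is the final step (the paper picks an eigenvector of a nonzero eigenvalue, you use polarization), and your explicit choice $W = e_1 x^\top$ is slightly cleaner than the paper's claim that $W^\top W$ ranges over all positive semidefinite matrices, which holds only for $m \ge d$, although the paper's argument needs only rank-one $S = vv^\top$ and is unaffected.
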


\begin{proof}
The Frobenius norm satisfies $\|WM\|_F^2 = \tr(M^\top W^\top W M)$. For this to equal $\|W\|_F^2 = \tr(W^\top W)$ for all $W$, we require
\begin{equation}
    \tr(M^\top S M) = \tr(S) \quad \forall\, S \succeq 0 \, ,
\end{equation}
where $S = W^\top W$ ranges over all positive semidefinite matrices as $W$ varies. By the cyclic property of the trace, this is equivalent to $\tr(S \cdot MM^\top) = \tr(S)$, or
\begin{equation}
    \tr\bigl(S(MM^\top - I)\bigr) = 0 \quad \forall\, S \succeq 0 \, .
\end{equation}
Suppose $MM^\top - I \neq 0$. Then $MM^\top - I$ is a nonzero symmetric matrix and therefore has at least one nonzero eigenvalue $\lambda \neq 0$ with corresponding eigenvector $v$. Choosing $S = vv^\top \succeq 0$, we obtain $\tr(vv^\top(MM^\top - I)) = v^\top(MM^\top - I)v = \lambda \neq 0$, which is a contradiction. Therefore $MM^\top = I$, hence $M \in O(d)$.
\end{proof}

Since the Cayley transform parametrizes $SO(d)$, our restriction to this subgroup is without loss of generality. The LogSumExp penalty depends on $|z_i|$, which is invariant under sign flips, so every minimum within $O(d)$ is attainable within $SO(d)$.

\end{document}